\def\eqref#1{equation~\ref{#1}}
\def\1{\bm{1}}
\def\rvg{{\mathbf{g}}}
\def\rvx{{\mathbf{x}}}
\def\rvy{{\mathbf{y}}}
\DeclareMathAlphabet{\mathsfit}{\encodingdefault}{\sfdefault}{m}{sl}
\SetMathAlphabet{\mathsfit}{bold}{\encodingdefault}{\sfdefault}{bx}{n}
\newcommand{\ptrain}{\hat{p}_{\rm{data}}}
\newcommand{\removelatexerror}{\let\@latex@error\@gobble}
\theoremstyle{plain}
\newcounter{theoremcounter}
\newtheorem{lemma}[theoremcounter]{Lemma}
\theoremstyle{definition}
\def\ptrue{p_{\text{ideal}}}
\def\ptrain{p_{\text{train}}}
\begin{document}

\twocolumn[
\icmltitle{Examining and Combating Spurious Features under Distribution Shift
}



\icmlsetsymbol{equal}{*}

\begin{icmlauthorlist}
\icmlauthor{Chunting Zhou}{cmu}
\icmlauthor{Xuezhe Ma}{usc}
\icmlauthor{Paul Michel}{cmu}
\icmlauthor{Graham Neubig}{cmu}
\end{icmlauthorlist}

\icmlaffiliation{cmu}{Language Technologies Institute, Carnegie Mellon University, Pittsburgh, USA}
\icmlaffiliation{usc}{Information Sciences Institute, University of Southern California, Log Angeles, USA}

\icmlcorrespondingauthor{Chunting Zhou}{chuntinz@cs.cmu.edu}

\icmlkeywords{Machine Learning, distributionally robust optimization, group robustness, noisy groups, ICML}

\vskip 0.3in
]



\printAffiliationsAndNotice{}  

\begin{abstract}
A central goal of machine learning is to learn robust representations that capture the causal relationship between inputs features and output labels.
However, minimizing empirical risk over finite or biased datasets often results in models latching on to \emph{spurious correlations} between the training input/output pairs that are not fundamental to the problem at hand.
In this paper, we define and analyze robust and spurious representations using the information-theoretic concept of \emph{minimal sufficient statistics}.
We prove that even when there is only bias of the input distribution (i.e.~\emph{covariate shift}), models can still pick up spurious features from their training data.
Group distributionally robust optimization (DRO) provides an effective tool to alleviate covariate shift by minimizing the \emph{worst-case} training loss over a set of pre-defined groups.
Inspired by our analysis, we demonstrate that group DRO can fail when groups do not directly account for various spurious correlations that occur in the data.
To address this, we further propose to minimize the worst-case losses over a more flexible set of distributions that are defined on the \emph{joint distribution} of groups and instances, instead of treating each group as a whole at optimization time.
Through extensive experiments on one image and two language tasks, we show that our model is significantly more robust than comparable baselines under various partitions.
Our code is available at \url{https://github.com/violet-zct/group-conditional-DRO}.

\end{abstract}
\vspace{-5mm}

\section{Introduction}
Many machine learning models that minimize the average training loss via empirical risk minimization (ERM) are trained and evaluated on randomly shuffled and split training and test sets.
However, such in-distribution learning setups can hide critical issues: models that achieve high accuracy on average often underperform when the test distribution drifts away from the training one~\citep{hashimoto2018fairness,koenecke2020racial,koh2020wilds}.
Such models are often ``right for the wrong reasons" due to reliance on \emph{spurious correlations} (or ``\emph{dataset biases}")~\citep{torralba2011unbiased,goyal2017making,mccoy2019right,gururangan2018annotation}, heuristics that hold for most training examples but are not inherent to the task of interest, such as strong associations between the presence of green pastures background with the label ``cows'' in image classification.
Naturally, models that use such features will fail when tested on data where the correlation does not hold.

\begin{figure}[t]
\vspace{-3mm}
    \centering
      \includegraphics[width=0.4\textwidth]{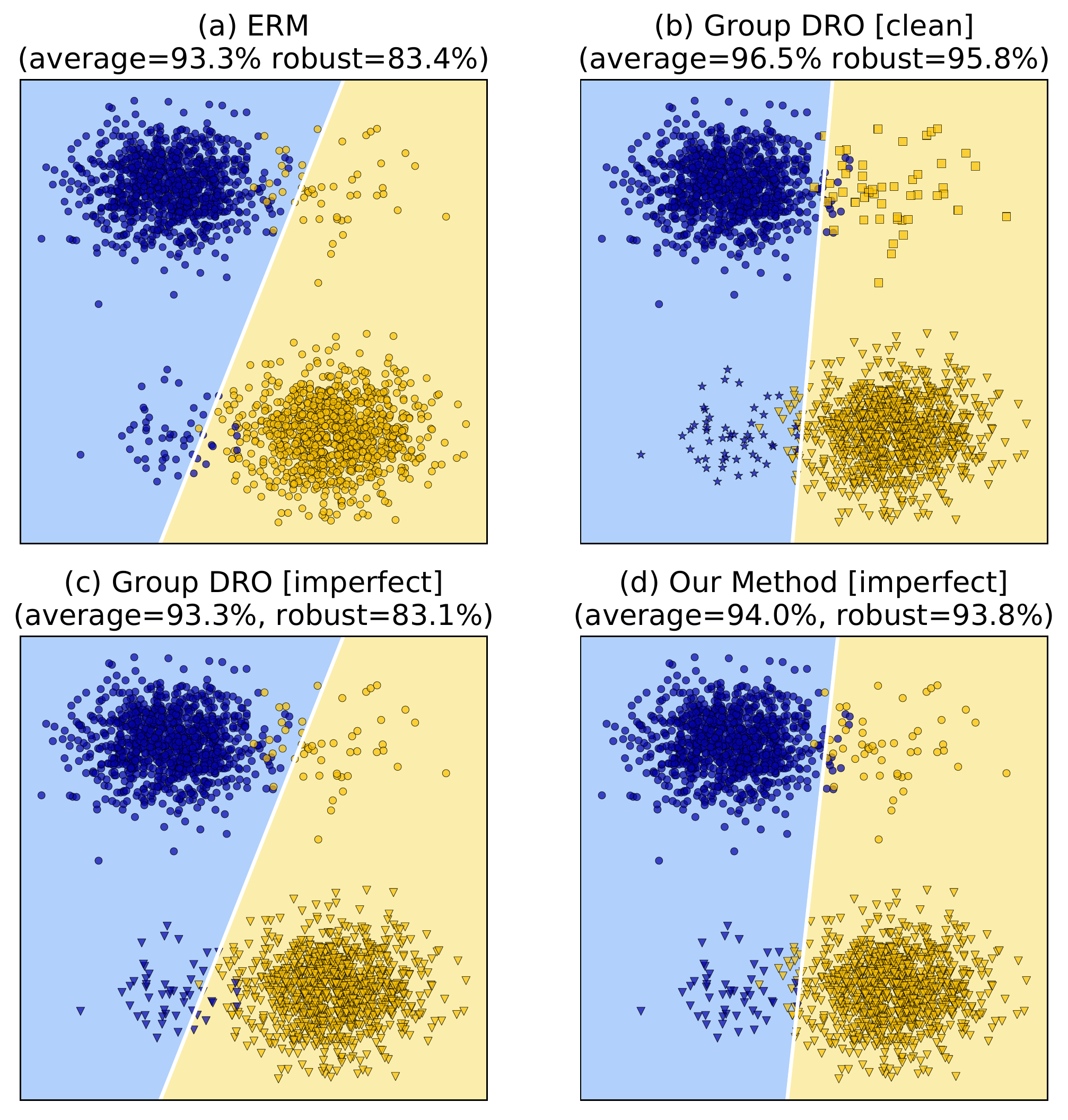}
    \vspace{-3mm}
    \caption{Consider data points $x$ in $\mathbb{R}^2$ with two classes $y$. The vertical axis of $x$ is a spurious feature that highly correlates with $y$, and the horizontal axis is the robust feature. There are two subclasses in each class, where the top-right and lower-left are two minority subclasses. The robust accuracy is test worst-case accuracy over the four subclasses. 
    We train a linear classifier with different methods. 
    For models trained with the clean partitions, each subclass is a group.
    For the imperfect partitions, dots with the same shape is a group (best viewed in color).
    }
    \label{fig:intro}
    \vspace{-6mm}
\end{figure}
Recent work has investigated how models trained with ERM learn spurious features that do not generalize, from the points of view of
causality~\cite{arjovsky2019invariant}, understanding model overparameterization~\citep{sagawa2020investigation} and information theory~\citep{lovering2020info}. 
However, these works have not characterized the idea of spurious features mathematically.
In this paper, we characterize spurious features from an information-theoretic perspective.
We consider prediction of target random variable $Y\in \mathcal{Y}$ from input variable $X \in \mathcal{X}$ and characterize spurious features learned under changes to the input distribution $p(X)$ (i.e.~covariate shift).

A central goal of machine learning is to learn true causal relationships between $X$ and $Y$ in a manner robust to spurious factors concerning the variables.
We assume that there exists an \emph{``ideal''} data distribution $\ptrue$ (short for $\ptrue(X, Y)$ below) which contains data from all possible experimental conditions concerning the confounders that cause spurious correlations, both observable and hypothetical~\citep{lewis2013counterfactuals,arjovsky2019invariant,bellot2020generalization}. 
For example, consider the problem of classifying images of cows and camels~\citep{beery2018recognition}. 
Under the ideal conditions, we assume that pictures of cows and camels on any background can be collected, including cows in deserts and camels in green pastures.
Therefore, under $\ptrue$ the background of the image $X$ is no longer a spurious factor of the label $Y$.
However, such an ``ideal" distribution $\ptrue$ is not accessible in practice~\citep{bahng2020learning,koh2020wilds,mccoy2019right}, 
and our training distribution $\ptrain$ (often, in practice, 
an associated empirical distribution) does not match $\ptrue$. 
ERM-based learning algorithms indiscriminately fit all correlations found in $\ptrain$, including spurious correlations based on confounders~\citep{tenenbaum2018building,lopez2016dependence}.

To investigate the spurious features learned under the distribution shift from $\ptrue$ to $\ptrain$, we first characterize those features of $X$ which most efficiently capture all possible information needed to predict $Y$. We define these \emph{robust features} using the notion of \emph{minimal sufficient statistic (MSS)}~\citep{dynkin2000necessary,cvitkovic2019minimal} under $\ptrue$.
We then examine \textit{whether the features learned under $\ptrain$ contain spurious features compared to the MSS learned under $\ptrue$}.
Through our analysis, we find that even only with covariate shift, the features learned on $\ptrain$ can contain spurious features or miss robust features of $\ptrue$.

Models that fit spurious correlations in $\ptrain$ can be vulnerable to groups (subpopulations of $\ptrue$/$p_{\text{test}}$) where the correlation does not hold.
A common approach to avoid learning a model that suffers high worst group errors
is group distributionally robust optimization (group DRO), a training procedure that efficiently minimizes the worst expected loss over a set of groups in the training data~\citep{oren2019distributionally,sagawa2019distributionally}.
The partition of groups can be defined in several ways, such as by 
presence of manually identified potentially spurious features~\citep{sagawa2019distributionally}, data domains~\citep{koh2020wilds}, or topics of text~\citep{oren2019distributionally}. 
In a typical setup, the groups of interest in the test set align with those used to partition the training data. 
Under such setups, group DRO usually outperforms ERM with respect to the worst-group accuracy.
We contend that this is because it promotes learning robust features that perform uniformly well across all groups.
However, in many tasks, we can not collect clean group membership of training examples due to expensive annotation cost or privacy concerns regarding e.g. demographic identities of users or other sensitive information.

Inspired by our analysis of spurious features, we demonstrate that group DRO can fail under ``imperfect" partitions of training data that are not consistent with the test set, especially when 
reducing spurious correlation in one group could exacerbate the spurious correlations in another
(\S\ref{sec:gdro:fail}), as shown in Fig.~\ref{fig:intro}.
This is because group DRO treats each training group as a unit, preventing it from adjusting learning weights differently for subgroups within each group.
Recent work has proposed to use sophisticated unsupervised clustering algorithm to search for meaningful subclasses~\citep{sohoni2020no} and execute group DRO on the found subclasses. 
To learn robust models under noisy protected groups, \citet{wang2020robust} designs robust approaches that is based on an estimate of a noise model between the clean and noisy groups.
Instead of relying on good partitions of groups or a not readily available noise model, we propose group-conditional DRO (GC-DRO) that defines the uncertainty set over the joint distribution of groups and their instances (i.e. $q(G)q(X, Y|G)$). 
Every training example is reweighted by both its group weight and the instance-level weight, which offers a more flexible uncertainty set compared to group DRO.
Through extensive experiments on three tasks --- facial attribute classification, natural language inference, and toxicity detection, we show that GC-DRO significantly outperforms both ERM and group DRO in various partitions of training data and demonstrate the robustness of GC-DRO against various group partitions.

\vspace{-2mm}
\section{Preliminaries on Robust Representations}
\label{sec:def:spurious}
To study spurious features, we need to formally define which features or properties of the data describe spurious correlations, and which features are robust features relevant to the task at hand. 
In supervised learning we are interested in finding a good representation $T(X)$ of the input $X$\footnote{We assume that $T(X)$ is a deterministic mapping of $X$ given neural network parameters.} that is useful to predict a target label $Y$.
What characterizes the optimal representations of $X$ w.r.t. $Y$ is much debated, but a common assertion is that $T(X)$ should be a \emph{minimal sufficient statistic} (MSS) of $X$ for $Y$~\citep{adragni2009sufficient,shwartz2017opening,achille2018emergence,cvitkovic2019minimal}, which is:

(i) $T(X)$ should be \emph{sufficient} for $Y$, i.e. $\forall x\in \mathcal{X}, t\in \mathcal{T}, y\in \mathcal{Y}, p(x|t, y)=p(x|t)$, which is equivalent to $p(y|t, x)=p(y|t)$. This means given the value of $T(X)$, the distribution of $X$ does not depend on the value of $Y$.

(ii) Given that $T(X)$ is sufficient, it should be \emph{minimal} w.r.t.~$X$, i.e.~for any sufficient statistic $S$, there exists a deterministic function $f$ such that $T = f(S)$ almost everywhere w.r.t.~$X$. This means for any measurable, non-invertible function $g$, $g(T)$ is no longer sufficient for $Y$.

In other words, the minimal sufficient statistics most efficiently capture all information useful for predicting $Y$.
The notion of MSS has been connected to Shannon's information theory~\citep{kullback1951information,cover1999elements} and extended to any joint distribution $P(X, Y)$ of $X$ and $Y$ in the information bottleneck (IB) framework~\citep{tishby2000information,shamir2010learning,kolchinsky2018caveats}, which provides a principled way to characterize the extraction of relevant information from $X$ for predicting $Y$.
Loosely speaking, learning a MSS $T$ is equivalent to maximizing $I(T(X); Y)$ (sufficiency) and minimizing $I(X; T(X))$ (minimality). 

\vspace{-1mm}
\noindent \textbf{Robust Features.} 
Suppose $\mathcal{A}$ contains all possible combinations of spurious variables, both observable and hypothetical, and we consider datasets $\mathcal{D}_{(a,y)} = \{x_i\}_{i=1}^{N_{a, y}}$ collected under each condition of $(a\in\mathcal{A}, y\in\mathcal{Y})$, where each $\mathcal{D}_{(a, y)}$ contains examples that are \emph{i.i.d.} according to some probability distribution $p(X|y, a)$. 
We define $\ptrue$ as the mixture distribution of $p(X|y, a)$ with uniform weights over $(a, y) \in \mathcal{A} \times \mathcal{Y}$. 
Thus, MSS learned on $\ptrue$ provide a good candidate for \emph{robust features} $T(X)$ (sometimes denoted $T_{ideal}(X)$ for clarity), which most efficiently capture the information from $X$ necessary for predicting $Y$ on a distribution that is free of spurious factors.

\vspace{-1mm}
\noindent \textbf{Spurious Features.} In contrast, we define representations $T'(X)$ that contain spurious features.
Specifically, the entropy of $T'(X)$ conditioned on $T(X)$ under $\ptrue$ is positive.
\vspace{-1mm}
\begin{equation}
\small
\label{eq:spu}
    H_{\textcolor{blue}{ideal}}(T'(X) | T(X)) > 0
\vspace{-1mm}
\end{equation}
Because these learned features are not deterministic given $T(X)$ then they contain \emph{additional} information that is not useful for predicting $Y$.%
\footnote{Note that it is not just the case of $T'(X)$ containing redundant features, in which case $H(T'(X) | T(X)) = 0$.}
For example, in image classification, knowing that the image contains a horse, we cannot predict the background with certainty (a horse could be on a race track or a beach).
Another example in natural language inference (NLI) task is that model learned on a biased data set often associates negation with the label ``contradiction". This is another spurious feature under our definition, because given the meaning of a sentence (robust features), whether it contains negation or not is not deterministic, e.g.~``Don't worry.'' and ``Be calm.'' are synonymous but only one contains negation.
A classifier that uses these spurious features can suffer from the risk of learning the \textit{spurious correlations} between $T'(X)$ and the labels $Y$.

\vspace{-2mm}
\section{Spurious Features under Covariate Shift}
\label{sec:inv:cov}
\vspace{-1mm}
The training data is often marred by various abnormalities, such as selection biases~\citep{buolamwini2018gender} and confounding factors~\citep{gururangan2018annotation}. 
We ask \emph{if the MSS learned under $\ptrain$ are robust features under $\ptrue$}.
Note that we do not study how to learn MSS via ERM in this paper, on the other hand, considering that MSS provides a good candidate for robust representations, we want to study if the MSS learned under $\ptrain$ contains spurious features with respect to the MSS learned under $\ptrue$, which are universal robust features against various spurious factors.

We consider the distribution shift in $p(X)$,\footnote{It is often assumed that $p(Y|X)$ is invariant in supervised learning problems~\citep{arjovsky2019invariant}.} also known as covariate shift~\citep{david2010impossibility}, and we show that the entropy of MSS learned under $\ptrain$ conditioned on the robust features is zero in Theorem~\ref{thm:cov} with proofs in \S\ref{append:covariate:proof}.
\begin{restatable}{theorem}{cov}
\label{thm:cov}
Suppose that there is only covariate shift in $\ptrain$, i.e.~$\exists x \in \mathcal{X}_{train} \text{~s.t.~}\ptrain(x) \ne \ptrue(x)$ but $\ptrain(Y|X=x) = \ptrue(Y|X=x), \,\, \forall x \in \mathcal{X}_{train}$. Let $T_{\text{train}}(X)$ be the MSS representation learned under $\ptrain$, then we have: 
\vspace{-2mm}
\begin{equation}\label{eq:mss}
\small
H_{\textcolor{blue}{train}}(T_{train}(x)|T_{ideal}(x)) = 0.
\vspace{-2mm}
\end{equation}
\end{restatable}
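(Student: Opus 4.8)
The plan is to reduce the statement to a purely set-theoretic fact about the partitions of the input space induced by the two minimal sufficient statistics, exploiting the observation that for a prediction problem the MSS is completely determined by the conditional law $p(Y \mid X)$ and is insensitive to the input marginal $p(X)$. Concretely, I will show that on $\mathcal{X}_{train}$ the partition induced by $T_{train}$ coincides with the partition induced by $T_{ideal}$, so that $T_{train}$ is a deterministic function of $T_{ideal}$ on the support of $\ptrain$; the conditional entropy in \eqref{eq:mss} then vanishes because a variable that is an almost-sure function of another carries no residual uncertainty given it.

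First I would record the characterization of the MSS implied by the definitions in \S\ref{sec:def:spurious}. Sufficiency of $T$ is the statement $p(y \mid T(x), x) = p(y \mid T(x))$; since $T(x)$ is a deterministic function of $x$, conditioning on $x$ already fixes $T(x)$, so this is equivalent to $p(Y \mid X = x) = \phi(T(x))$ for some map $\phi$. Hence $T$ is sufficient if and only if its level sets refine the level sets of the map $x \mapsto p(Y \mid X = x)$, i.e.\ $T(x_1) = T(x_2) \Rightarrow p(Y \mid x_1) = p(Y \mid x_2)$. Minimality (part (ii)) selects the coarsest such statistic, which is exactly the one whose level sets equal those of $x \mapsto p(Y \mid X = x)$; thus, up to a relabeling bijection, the MSS \emph{is} the conditional-distribution map itself, and the induced partition depends on the underlying distribution only through $p(Y \mid X)$.

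Applying this to both distributions, for $x_1, x_2 \in \mathcal{X}_{train}$ we have $T_{ideal}(x_1) = T_{ideal}(x_2) \iff \ptrue(Y \mid x_1) = \ptrue(Y \mid x_2)$ and $T_{train}(x_1) = T_{train}(x_2) \iff \ptrain(Y \mid x_1) = \ptrain(Y \mid x_2)$. The covariate-shift hypothesis $\ptrain(Y \mid X = x) = \ptrue(Y \mid X = x)$ on $\mathcal{X}_{train}$ makes the two right-hand conditions identical, so the two partitions of $\mathcal{X}_{train}$ agree; in particular $T_{ideal}(x_1) = T_{ideal}(x_2)$ forces $T_{train}(x_1) = T_{train}(x_2)$. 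Therefore there is a well-defined $h$ with $T_{train}(x) = h(T_{ideal}(x))$ for every $x \in \mathcal{X}_{train}$, and $H_{train}(T_{train}(X) \mid T_{ideal}(X)) = 0$ follows.

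The main obstacle is not the covariate-shift step, which is immediate once the characterization is in hand, but establishing that characterization rigorously: I must argue that ``minimal'' in the sense of part (ii) genuinely yields the $p(Y \mid X)$ level sets, handle the fact that the MSS is only unique up to a bijection (so the claim is about partitions, not literal equality of $T$-values), and phrase every equality almost everywhere with respect to $\ptrain$ rather than pointwise, since conditionals are only defined up to null sets. I would also note that the hypothesis implicitly requires $\mathcal{X}_{train}$ to lie in the support of $\ptrue$, so that $T_{ideal}$ and $\ptrue(Y \mid \cdot)$ are in fact defined there.
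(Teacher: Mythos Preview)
Your proposal is correct and reaches the same conclusion, but by a genuinely different route than the paper. The paper's argument is information-theoretic: it first establishes the lemma $p(Y\mid T(X))=p(Y\mid X)$ for any sufficient $T$, uses it together with the covariate-shift assumption to show $H_{train}(Y\mid T_{ideal}(X))=H_{train}(Y\mid T_{train}(X))$, infers $I_{train}(Y;T_{ideal}(X))=I_{train}(Y;X)$, concludes that $T_{ideal}$ is sufficient under $\ptrain$, and then invokes minimality of $T_{train}$ to obtain $T_{train}=f(T_{ideal})$. Your argument bypasses the entropy and mutual-information bookkeeping entirely by identifying the MSS with the level-set partition of $x\mapsto p(Y\mid x)$ and observing that covariate shift leaves this map unchanged on $\mathcal{X}_{train}$, so the two partitions agree there. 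This is more elementary and more transparent about why the result is true; it also yields the two-sided statement (each statistic is a function of the other on $\mathcal{X}_{train}$), which is exactly what the paper needs separately for Corollary~\ref{cor}. The paper's route, on the other hand, stays within the information-bottleneck idiom used throughout \S\ref{sec:def:spurious} and Appendix~\ref{app:connect}, and does not require you to justify the level-set characterization of the MSS from the paper's definition (ii); that justification is the one step in your plan you would have to write out carefully, though it is standard once you note that $x\mapsto p(Y\mid x)$ is itself a sufficient statistic.
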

Theorem~\ref{thm:cov} tells us that $T_{train}(X)$ is deterministic given $T_{ideal}(X)$ under $\textcolor{blue}{\ptrain}$ (shown in blue to distinguish from Eq.~\ref{eq:spu}).
However, this does not imply $H_{ideal}(T_{train}(X) | T_{ideal}(X)) = 0$ under $\textcolor{blue}{\ptrue}$.
Thus, we \emph{cannot} conclude that $T_{train}(X)$ contains no spurious features. 
We further discuss the implications with two cases based on the relationship between the support of input $\mathcal{X}_{train}$ and that of $\mathcal{X}_{ideal}$: (1) $\mathcal{X}_{train} = \mathcal{X}_{ideal}$ and (2) $\mathcal{X}_{train} \subset \mathcal{X}_{ideal}$.
When the input support of $\ptrain$ is equal to that of $\ptrue$, we have the following corollary:
\begin{restatable}{corollary}{cor}
\label{cor}
Suppose $\mathcal{X}_{train}$ =  $\mathcal{X}_{ideal}$ in Theorem~\ref{thm:cov}, then $T_{train}(X)$ is also the MSS under $\ptrue$. 
\end{restatable}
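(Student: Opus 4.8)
The plan is to show that both defining properties of an MSS --- sufficiency and minimality --- are determined entirely by the pair $(p(Y\mid X), \mathcal{X})$, that is, by the conditional label distribution together with the input support, and that both of these are shared by $\ptrain$ and $\ptrue$ under the hypotheses of Theorem~\ref{thm:cov} together with $\mathcal{X}_{train}=\mathcal{X}_{ideal}$. Once this is in place, any statistic that is an MSS under $\ptrain$ is automatically an MSS under $\ptrue$, and applying it to $T_{train}$ gives the corollary.

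First I would isolate the key lemma: a statistic $T$ is sufficient under a distribution $p$ if and only if, for every $x_1,x_2$ in the support with $T(x_1)=T(x_2)$, one has $p(Y\mid X=x_1)=p(Y\mid X=x_2)$. This is just a restatement of the sufficiency condition $p(y\mid t,x)=p(y\mid t)$ from Section~\ref{sec:def:spurious}: conditioning on $T(x)$ must already pin down the conditional law of $Y$. Because $\ptrain(Y\mid X=x)=\ptrue(Y\mid X=x)$ for all $x$ (covariate shift) and $\mathcal{X}_{train}=\mathcal{X}_{ideal}$, the right-hand condition is literally the same statement under $\ptrain$ and $\ptrue$. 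Hence $T$ is sufficient under $\ptrain$ iff it is sufficient under $\ptrue$, so the two distributions have identical families of sufficient statistics.

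Next I would transfer minimality. Since $T_{ideal}$ is sufficient under $\ptrue$, the lemma makes it sufficient under $\ptrain$ as well, so minimality of $T_{train}$ under $\ptrain$ supplies a function $h$ with $T_{train}=h(T_{ideal})$ $\ptrain$-a.e.\ --- indeed this is exactly the content of Theorem~\ref{thm:cov}, which asserts $H_{train}(T_{train}(X)\mid T_{ideal}(X))=0$. I then upgrade this to a $\ptrue$-a.e.\ identity. Now take any $S$ sufficient under $\ptrue$: minimality of $T_{ideal}$ under $\ptrue$ gives $T_{ideal}=\phi(S)$ $\ptrue$-a.e., whence $T_{train}=h(\phi(S))$ $\ptrue$-a.e. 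Thus $T_{train}$ factors through every $\ptrue$-sufficient statistic, which is precisely minimality under $\ptrue$; combined with its $\ptrue$-sufficiency (the lemma applied to $T_{train}$), this shows $T_{train}$ is an MSS under $\ptrue$.

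The step I expect to require the most care is the ``almost everywhere'' upgrade in the minimality argument: the relation $T_{train}=h(T_{ideal})$ is produced a.e.\ with respect to $\ptrain$, whereas the factorization $T_{ideal}=\phi(S)$ and the final conclusion live a.e.\ with respect to $\ptrue$, so I must verify that a $\ptrain$-null exceptional set is also $\ptrue$-null. This is where $\mathcal{X}_{train}=\mathcal{X}_{ideal}$ does the real work: equal supports make $\ptrain$ and $\ptrue$ mutually absolutely continuous, so their null sets coincide and the a.e.\ identities can be composed without leaking mass. Everything else is a direct reading-off of the definitions, which is why the conclusion follows quickly once the support hypothesis is imposed.
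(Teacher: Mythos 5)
Your proposal is correct and follows essentially the same route as the paper: sufficiency of $T_{train}$ under $\ptrue$ is transferred via the agreement of $p(Y\mid X)$ on the common support (the paper does this by repeating the mixture computation of its Eq.~(3) to get $\ptrue(y\mid T_{train}(x))=\ptrue(y\mid T_{ideal}(x))$), and minimality is transferred by composing the factorization $T_{train}=h(T_{ideal})$ from Theorem~\ref{thm:cov} with the minimality of $T_{ideal}$. Your packaging of the first step as a lemma that sufficiency depends only on $\bigl(p(Y\mid X),\mathcal{X}\bigr)$, and your explicit handling of the $\ptrain$-a.e.\ versus $\ptrue$-a.e.\ issue via mutual absolute continuity, are just more careful statements of what the paper's terse proof leaves implicit.
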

\vspace{-2mm}
Corollary~\ref{cor} corroborates the findings in \citet{wen2014robust} that the (unweighted) solution learned by ERM is also the robust solution when only covariate shift exists and $\mathcal{X}_{train}=\mathcal{X}_{ideal}$. 
In practice, however, this assumption does not hold (because we only have datasets with limited support) and thus the representation $T_{train}(X)$ learned by ERM is not necessarily equivalent to $T_{ideal}(X)$.
By Theorem~\ref{thm:cov}, $T_{train}(X)$ is deterministic given $T_{ideal}(X)$ under $\ptrain$, which implies that the information contained in $T_{train}(X)$ is equal to or less than that contained in $T_{ideal}(X)$. In the former case, $T_{train}(X)$ can be equivalent in representation to $T_{ideal}(X)$ but can also contain spurious features that co-occur with the robust features in the training data. In the latter case, $T_{train}(X)$ can miss robust features in $T_{ideal}(X)$. We demonstrate these two cases with synthetic experiments in Appendix~\ref{app:syn} due to space limit.

\vspace{-1mm}
\noindent \textbf{Discussion.} We have discussed the cases of learning spurious features when the model learns MSS under $\ptrain$. 
However, we normally adopt maximum likelihood estimation (MLE) as an instantiation of ERM for classification problems. 
We provide the connection of MLE with learning MSS via the information bottleneck method~\citep{tishby2000information,shamir2010learning} in the Appendix~\ref{app:connect}, where under certain assumptions, we can view MLE as an objective that approximately learns MSS. 

\section{Does Group DRO Learn Robust Features?}
The discussions in \S\ref{sec:inv:cov} suggest that under covariate shift, directly learning from the empirical data distribution
can result in learning the spurious correlations satisfied by the majority of the training data. 
When the spurious factors are known, we can apply group distributionally robust optimization (group DRO), which reweights the losses of different groups associated with spurious factors to alleviate covariate shift and learn robust features that generalize to both minority and majority groups. In this section, we first review group DRO and discuss under which cases it can fail.

\subsection{Group Distributionally Robust Optimization}
\label{sec:gdro}
Group DRO is an instance of distributionally robust optimization~\citep{ben2013robust,duchi2016statistics} that minimizes the worst expected loss over a set of potential test distributions $\mathcal{Q}$ (the uncertainty set):
\vspace{-2mm}
\begin{equation}
\small
    \mathcal{L}_{\mathrm{DRO}}(\theta) = \sup_{q\in\mathcal{Q}}\mathbb{E}_{(x, y)\sim q}\big[ \ell(x, y; \theta) \big ]
    \label{eq:dro}
\vspace{-2mm}
\end{equation}
This worst-case objective upper bounds the test risk for all $q_{\mathrm{test}} \in \mathcal{Q}$, which is useful for learning under train-test distribution shift.
However, its success crucially depends on choosing an adequate uncertainty set that encodes the possible test distributions of interest.
Choosing a general family of distribution as the uncertainty set, such as a divergence ball around the training distribution~\citep{ben2013robust,hu2013kullback,gao2016distributionally}, encompasses a wide set of distribution shifts, but can also lead to a conservative objective emphasizing implausible worst-case distributions~\citep{duchi2019distributionally,oren2019distributionally}.

To construct a viable uncertainty set, one can optimize models over all meaningful subpopulations or groups $g$ depending on the available source information regarding the data, such as domains, demographics, topics, etc. 
Group DRO~\citep{hu2018does,oren2019distributionally} leverages such structural information and constructs the uncertainty set as any mixture of these groups. 
Following \citet{oren2019distributionally}, we adopt the conditional value at risk (CVaR) which is a type of distributionally robust risk to achieve low losses on all $\alpha$-fraction subpopulations~\citep{rockafellar2000optimization}
of the training distribution (i.e.~$\{p: \alpha p(x) \leq \ptrain(x), \forall x\}$).
As we assume that each data point comes from some group $p(x, y|g)$ and $\ptrain$ is a mixture of $m$ groups $\ptrain(g)$, we can extend the definition of CVaR to groups and construct the uncertainty set $\mathcal{Q}$ as all group distributions that are $\alpha$-\emph{covered} by $\ptrain(g)$ (or \textit{topic CVaR} \citep{oren2019distributionally}):
\vspace{-1mm}
\begin{equation}
\small
    \mathcal{Q} = \left\{q:  q(g) \leq \frac{\ptrain(g)}{\alpha}~~\forall g\right\}
\vspace{-1mm}
\end{equation}
This upper bounds the group distribution within the uncertainty set by its corresponding training distribution.
The group DRO objective then minimizes the expected loss under the worst-case group distribution:
\begin{equation}
\small
    \mathcal{L}_{\mathrm{GDRO}} = \sup_{q \in \mathcal{Q}}\mathbb{E}_{g\sim q}\mathbb{E}_{(x, y)\sim p(x, y|g)}\left[ \ell(x, y; \theta) \right]
    \label{eq:gdro}
\vspace{-1mm}
\end{equation}
Intuitively, this objective encourages uniform losses across different groups, which allows us to learn a model that is robust to group shifts. We adopt the efficient online greedy algorithm developed in~\citet{oren2019distributionally} to update the model parameters $\theta$ and the worst-case distribution $q$ in an interleaved manner. 
The greedy algorithm roughly amounts to upweighting the sample losses by $\frac{1}{\alpha}$ which belong to the $\alpha$-fraction of groups that have the worst losses. We present the detailed algorithm in Appendix~\ref{app:alg:gdro}.

\subsection{Group DRO Can Fail with Imperfect Partitions}
\label{sec:gdro:fail}
As discussed earlier, we aim to learn a model that is robust to spurious factors. 
For example, in toxicity detection, a robust model should perform equally well on data from different demographic groups.
Group DRO mitigates covariate shift by minimizing the worst-case loss under the uncertainty set $\mathcal{Q}$, consisting of mixtures of sub-group distributions.
Intuitively, given that optimizing $\ptrue$ allows for learning of robust, non-spurious features, defining a $\mathcal{Q}$ that covers $\ptrue$ is highly advantageous from a learning perspective.

If we know all the spurious attributes of the training data  $\mathcal{A}$, we can adopt the setup in \citet{sagawa2019distributionally} that divides the data into $|\mathcal{A}| \times |\mathcal{Y}|$ groups, where each example belongs to one of the groups $g = (a, y)$. 
We define such grouping strategy as ``clean partitions'' in which each group is uniquely associated with one value of $(a, y)$.\footnote{Our discussions also apply to multiple spurious attributes for which the clean partition corresponds to $|\mathcal{Y}| \times \prod_i |\mathcal{A}_i|$ groups.}
If $\mathcal{A}$ contains all the spurious factors of interest, it can be seen that there exists some mixture of groups $\sum_{g=1}^m q(g) \ptrain(\cdot\mid g)$ that can recover $\ptrue$, where $q \in\Delta_m$ and $\Delta_m$ is the $(m-1)$-dimensional probability simplex.
Thus, $\ptrue$ is contained in $\mathcal{Q}$.
Such clean partitions provide a plausible environment for group DRO to learn well in the presence of covariate shift
that causes spurious correlations in the training data. 

\begin{table}[t]
    \centering
    \small
    \begin{tabular}{l|cc|cc}
        \toprule
         \multirow{2}{*}{} & \multicolumn{2}{c|}{$G_1$} &  \multicolumn{2}{c}{$G_2$}\\
        & $S=0$ & $S=1$ & $S=0$ & $S=1$ \\
        \midrule
        $P(Y=0|S)$ & 0.5 & 0 & 1 & 0.5 \\
        $P(Y=1|S)$ & 0.5 & 1 & 0 & 0.5 \\
        \bottomrule
    \end{tabular}
    \vspace{-2mm}
    \caption{An example of imperfect partition.}
    \label{tab:gdro:example}
    \vspace{-3mm}
\end{table} 
In contrast, we define ``imperfect partitions'' where each group contains samples from multiple values of $(a, y)$ such that there does not exist a $q \in \Delta_m$ that recovers $\ptrue$, in other words, $\mathcal{Q}$ does not include $\ptrue$.
In this case, group DRO can not eliminate covariate shift effectively.

To illustrate, consider a binary random variable $S \in \{0, 1\}$ following a uniform distribution, and the target label $Y \in \{0, 1\}$ also follows a uniform distribution and is independent of $S$.
Due to covariate shift, there are spurious correlations between $S=0$, $Y=0$ and between $S=1$, $Y=1$ in the training data. 
We partition the training data into two groups with an equal number of samples and the conditional distribution of $P(Y|S)$ is shown in Tab.~\ref{tab:gdro:example}. 
To prevent the model from learning the spurious correlations between $S=1$ and $Y=1$, one can upweight losses of its ``negative" samples for which the spurious correlation does not hold, i.e. samples of $(S=1, Y=0)$ in $G_2$; however, group DRO upweights the group as a whole, which inevitably also upweights the $(S=0, Y=0)$ and causes the model to latch on the spurious attribute $S=0$ to predict $Y=0$. 
Therefore, there does not exist a mixture distribution of these two groups, under which $S \perp Y$ ($\ptrue$). 
Such underlying conflicts prevent the group DRO from formulating a worst-case distribution that can eliminate covariate shift, resulting in a passive reliance on certain spurious correlations.

Imperfect partitions of training data are common in practice, as it can be expensive or infeasible to acquire the labels of spurious attributes for each training instance. 
For example, we may only have rough partitions based on the data sources or the outputs from (unsupervised) clustering algorithms.
Our analysis shows that under these practical settings, the group DRO algorithm can not effectively alleviate covariate shift due to the rigid treatment of group losses.
\begin{algorithm}[t]
\SetAlgoLined
\DontPrintSemicolon
\SetKwInput{Input}{Input}
\SetKwInOut{Output}{Output}
\SetCommentSty{itshape}
\SetKwComment{Comment}{$\triangleright$\ }{}
\Input{$\alpha$; $\beta$; $m$: \#groups; $n_i$: \#samples of group $i$}
Initialize historical average group losses $\hat{L}^{(0)}$, historical estimate of group probabilities $\hat{p}^{tr(0)}$, historical average instance losses $\hat{L}_g^{(0)}$ and $q^{(0)}(x, y| g)=\bm{1}^T$ for $g \in \{1,\cdots,m\}$ \\
\For{$t=1,\cdots,T$}{
    Sample a mini-batch $(\rvx, \rvy, \rvg)$ from $P_{train}$ \\
    Perform online greedy updates for $q^{(t)} (Alg.\ref{alg:gdro})$\\
    \Comment{Update model parameters $\theta$}
    $d_i = \frac{n_i q^{(t)}(\rvg_i)q^{(t)}(x, y|\rvg_i)}{\hat{p}^{train(t)}(\rvg_i)}\nabla\ell(\rvx_i, \rvy_i;\theta^{(t-1)})$\\
    $\theta^{(t)} = \theta^{(t-1)} - \frac{\eta}{|B|} \sum_{i=1}^{|B|}d_i$\\
    \If{reached inner update criterion}{
    \Comment{Update $q^{(t)}(x, y|g)$}
    \For{$g=1, \cdots, m$}{
    Sort instances in group $g$ in the decreasing order of $\ell(x, y; \theta^{t})$; denote the sorted index $\bm{\pi}^g$\\
    $\mathrm{cutoff} =\left\lceil\frac{(N-n_i)n_i\beta}{N-n_i}\right\rceil$ \\
    $q^{(t)}((x, y)_{\bm{\pi}^g(j)}|g) = \frac{1}{\beta}, \forall 1 \leq j \leq \mathrm{cutoff}$\\
    $q^{(t)}((x, y)_{\bm{\pi}^g(j)}|g) = \frac{n_i}{N}, \forall j > \mathrm{cutoff}$
    }
    }
}
  \caption{\label{alg:sgdro}Online greedy algorithm for GC-DRO.}
\end{algorithm}

\section{Proposed Method: Group-conditional DRO}
Since group DRO can be problematic with imperfect partitions, we propose a more flexible uncertainty set over the joint distribution of $(x, y, g)$, i.e.~$q(g)q(x, y|g)$, using fine-grained weights over instances within each group instead of treating the entire group as a whole. 
We extend the $\alpha$-covered distribution to both the group-level ($q(g)$) and conditional instance-level ($q(x, y|g)$) distributions to define the uncertainty set $\mathcal{Q}$.
At training time, a sample is weighted by both its group weight induced from $q(g)$ as well as the instance-level weight induced from $q(x, y|g)$.
Specifically, the new uncertainty set is
\vspace{-2mm}
\begin{equation}
\small
   \begin{split}
    \mathcal{Q}^{\alpha, \beta} &= \left\{q(g)q(x, y|g):  q(g) \leq \frac{\ptrain(g)}{\alpha}, \right.\\ 
     & \left.\frac{1}{N} \leq q(x, y|g) \leq \frac{\ptrain(x, y|g)}{\beta}, \forall x, y, g \right\},
     \label{eq:newus}
\vspace{-2mm}
\end{split} 
\vspace{-5mm}
\end{equation}
where $N$ is the number of training examples and $\alpha,\beta\in (0, 1]$.
Denote $n_i$ the number of samples in group $i$, then $p^{train}(x, y|g=i)=\frac{1}{n_{i}}$.
The second constraint of Eq.~\ref{eq:newus} can be rewritten as $\frac{1}{N} \leq q(x, y|g) \leq \frac{1}{\beta n_i}$.
Compared with the $\beta$-covered distribution, we add a lower bound $q(x,y|g)\geq \frac{1}{N}$ to compensate for imbalanced group sizes. 
With a plain $\beta$-covered distribution for $q(x, y|g)$, the DRO objective roughly upweights a $\beta$-fraction of instance losses of each group.
However, we only want to emphasize a small subset of examples that perform badly in the majority groups.
Thus, we add this lower bound to $q(x, y|g)$ in Eq.~\ref{eq:newus} to directly ``punish'' larger groups.
To see this, the percentage of examples that are upweighted by $\frac{1}{\beta}$ in group $i$ is roughly $\frac{N-n_i}{N-n_i\beta}\beta$, which is monotonically decreasing function w.r.t. $n_i$. 
Therefore, the larger the group size $n_i$ is, the smaller fraction of instances in group $i$ are upweighted.

\vspace{-1mm}
\noindent \textbf{Online Optimization Algorithm.}
Similarly to the online greedy algorithm for group DRO~\citep{oren2019distributionally} (details in Appendix~\ref{app:alg:gdro}), we interleave the updates between model parameters $\theta$ and the worst-case distribution $q(g)q(x, y|g)$. 
The greedy algorithm involves sorting losses of all the variables when updating the worst-case distribution defined by the $\alpha$-covered distribution.
However, frequently updating $q(x, y|g)$ over large-scale training data (e.g millions of samples) can be costly and unstable. 
Therefore, we only update $q(g)$ at every iteration, while performing updates on $q(x, y|g)$ lazily once every epoch or when the robust accuracy on the validation set drops (inner update criterion).
We present the pseudo code for the training process in Alg.~\ref{alg:sgdro}.

\noindent \textbf{Discussions.} 
Another potential approach to circumventing the purely group-level loss is constructing an instance-level uncertainty set~\citep{ben2013robust,husain2020distributional,michel2021modeling}, however, the resulting $\mathcal{Q}$ can be too pessimistic~\citep{hu2018does,duchi2019distributionally} 
or difficult to optimize~\citep{michel2021modeling}.
Instead, we leverage the structural information of data partitions and expand the flexibility of uncertainty set by incorporating the conditional probabilities of instances.
Furthermore, this allows us to execute the min-max optimization in an efficient manner.

\section{Experiments}
In this section, we evaluate the proposed group-conditional DRO on one image classification task and two language tasks --- natural language inference and toxicity detection. 
To demonstrate the effectiveness of our method under various partitions of data, we first introduce the clean (group number $m=|\mathcal{A}|\times|\mathcal{Y}|$) and imperfect data partitions of each task. 
As we discussed at the end of \S\ref{sec:gdro:fail}, there are various cases where the partitions of training data are imperfect such that each group is \emph{not} purely associated with examples from one pair of $(a, y)$. 
In this section, we inspect several cases reflecting diverse properties of partitions to evaluate our method.
First, on the image and NLI tasks, we manually design adversarial partitions of data such that there are explicit conflicts between groups and purely reweighing over groups cannot eliminate covariate shift (\S\ref{sec:exp:data}). 
Second, we use the attributes provided by a supervised classifier to create the imperfect partitions of the toxicity data set (\S\ref{sec:exp:data}).
Third, we also perform unsupervised clustering on the toxicity data set to obtain imperfect partitions in \S\ref{sec:analysis}.

\vspace{-1mm}
\subsection{Data and Tasks}
\label{sec:exp:data}
\vspace{-1mm}

\begin{table}[t]
\begin{subtable}{0.5\textwidth}
    \centering
    \small
    \begin{tabular}{l|cc}
    \toprule
             & male & female \\
    \midrule
        dark &  65,487 / 1,387 & 22,880 / 48,749 \\
        blonde & 0 / 1,387 & 22,880 / 0 \\
    \bottomrule
    \end{tabular}
    \caption{The imperfect partitions for the CelebA dataset ($G_1/G_2$).}
    \label{tab:da}
    \vspace{2mm}
\end{subtable}

\begin{subtable}{0.48\textwidth}
    \centering
    \small
    \begin{tabular}{l|ccc}
    \toprule
          &  no neg & neg 1 & neg 2\\
    \midrule
        contradiction &  57,605 / 0 / 0 & 0 / 1,406 / 0 & 0 / 0 / 9,897\\
        entailment & 67,335 / 0 / 0 & 0 / 0 / 215 & 0 / 1,318 / 0 \\
        neutral & 66,401 / 0 / 0 & 0 / 0 / 251 & 0 / 1,747 / 0\\
    \bottomrule
    \end{tabular}
    \caption{The imperfect partitions for the MNLI dataset ($G_1/G_2/G_3$).}
    \label{tab:db}
    \vspace{2mm}
\end{subtable}

\begin{subtable}{0.5\textwidth}
    \centering
    \small
    \setlength{\tabcolsep}{4pt}
    \begin{tabular}{l|cccc}
    \toprule
           & White-aligned & AAE & Hispanic & Others\\
    \midrule
        abusive & 11,281 & 7,392 & 6,707 & 1,770 \\
        spam &  8,147 & 1,041 & 541 & 4,301 \\ 
        normal & 41,756 & 2,562 & 2,638 & 6,895 \\
        hateful & 2,696 & 1,420 & 509 & 340 \\
    \bottomrule
    \end{tabular}
    \caption{Statistics of each group in the clean partition of the hate speech dataset. Data of each dialect attribute (column) corresponds one group in the imperfect partition.}
    \label{tab:dc}
\end{subtable}
\vspace{-3mm}
\caption{Statistics of data in different groups partitioned by attributes (row) and labels (column).}
\vspace{-4mm}
\end{table}

\noindent \textbf{Object Recognition.} 
We use the \textbf{CelebA} dataset~\citep{liu2015deep} which has 162,770 training examples of celebrity faces. We classify the hair color from $\mathcal{Y}=\{$blond, dark$\}$ following the set up in \citet{sagawa2019distributionally}. 
In this task, labels are spuriously correlated with the demographic information --- gender of the input $\mathcal{A}=\{$female, male$\}$, which together with $\mathcal{Y}$ results in 4 clean groups.
The statistics of groups in the imperfect partition are presented in Tab.~\ref{tab:da} (separated by ``/''), each of which consists of data from multiple values of $(a, y)$. 
Concretely, we create an imperfect partition of \textbf{2} groups with two explicit spurious correlations: i) in group $G_1$ (dark, male) are spuriously correlated since we put all their counterparts (blonde, male) in group $G_2$; 
ii) similarly, (dark, female) in $G_2$ are spuriously correlated.
\begin{table*}[t]
\centering
\small
\begin{tabular}{l|l|cc|cc}
\toprule
\multirow{2}{*}{Datasets} &\multirow{2}{*}{Methods} & \multicolumn{2}{c|}{Clean Partition} &\multicolumn{2}{c}{Imperfect Partition} \\
\cmidrule{3-6}
& & Robust Acc & Average Acc & Robust Acc & Average Acc\\
\midrule
\multirow{5}{*}{Celeb-A} & ERM & 40.14 $\pm$ 0.99 & \textbf{95.92 $\pm$ 0.05} &40.14 $\pm$ 0.99 & \textbf{95.92 $\pm$ 0.05} \\
&resampling & 86.81 $\pm$ 1.26 &92.72 $\pm$ 0.28 &44.17 $\pm$ 1.15 &95.58 $\pm$ 0.03 \\
&group DRO (EG) &86.11 $\pm$ 1.96 &92.33 $\pm$ 0.65 &42.92 $\pm$ 0.91 &95.82 $\pm$ 0.07 \\
&group DRO (greedy) &88.19 $\pm$ 2.31 &92.65 $\pm$ 0.20 &45.97 $\pm$ 1.73 &95.81 $\pm$ 0.09 \\
\cmidrule{2-6}
&GC-DRO &\textbf{88.75 $\pm$ 0.82} &92.92 $\pm$ 0.16 & \textbf{82.85 $\pm$ 1.54} & 89.32 $\pm$ 2.21 \\
\midrule
\multirow{5}{*}{MNLI} &ERM &70.84 $\pm$ 2.47 &\textbf{86.18 $\pm$ 0.18} &70.84 $\pm$ 2.47 &\textbf{86.18 $\pm$ 0.18} \\
&resampling &67.02 $\pm$ 2.43 &85.72 $\pm$ 0.37 &67.26 $\pm$ 1.63 &85.22 $\pm$ 0.58 \\
&group DRO (EG) &\textbf{77.88 $\pm$ 1.36} &85.16 $\pm$ 0.44 &69.66 $\pm$ 1.98 &84.96 $\pm$ 0.56 \\
&group DRO (greedy)&75.14 $\pm$ 3.96 &85.82 $\pm$ 0.24 &70.34 $\pm$ 2.19 &86.02 $\pm$ 0.25 \\
\cmidrule{2-6}
&GC-DRO &77.82 $\pm$ 1.45 &85.04 $\pm$ 0.67 &\textbf{75.32 $\pm$ 0.93} &84.82 $\pm$ 0.74 \\
\midrule
\multirow{5}{*}{FDCL18} & ERM &34.30 $\pm$ 1.83 &\textbf{79.70 $\pm$ 1.05} &34.30 $\pm$ 1.83 &79.70 $\pm$ 1.05 \\
&resampling &55.44 $\pm$ 4.69 &72.04 $\pm$ 1.99 & 26.10 $\pm$ 4.11 &\textbf{80.66 $\pm$ 0.52} \\
& group DRO (EG) & 55.98 $\pm$ 1.67 &70.06 $\pm$ 3.06 &35.20 $\pm$ 2.24 &79.58 $\pm$ 0.95 \\
&group DRO (greedy) &56.83 $\pm$ 2.94 &70.52 $\pm$ 1.99 &36.24 $\pm$ 3.80 &79.40 $\pm$ 1.12 \\
\cmidrule{2-6}
&GC-DRO &\textbf{57.28 $\pm$ 2.71} &70.26 $\pm$ 0.94 &\textbf{48.42 $\pm$ 6.72} &72.02 $\pm$ 2.96 \\
\bottomrule
\end{tabular}
\vspace{-1mm}
\caption{Robust and average test accuracy and standard deviation on the three tasks.}
\vspace{-2mm}
\label{tab:res}
\end{table*}

\begin{figure*}[t]
    \centering
    \includegraphics[width=0.95\textwidth]{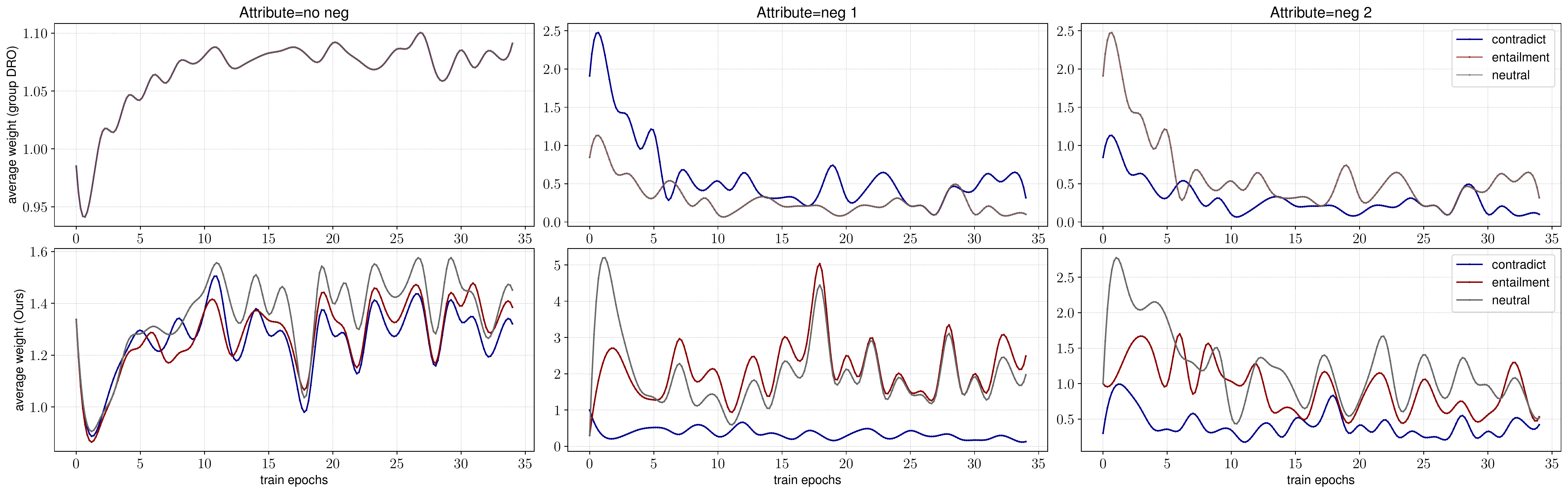}
    \vspace{-4mm}
    \caption{Under the imperfect partition of the MNLI dataset, the aggregated average training weights of instance losses in each group divided by attributes and labels (top: group DRO; bottom: GC-DRO).}
    \label{fig:weights}
    \vspace{-4mm}
\end{figure*}

\vspace{-1mm}
\noindent \textbf{Natural Language Inference (NLI).}
NLI is the task of determining whether a hypothesis is true (entailment), false (contradiction) or undetermined (neutral) given a premise. 
We use the \textbf{MultiNLI} dataset~\citep{williams2018broad} and follow the train/dev/test split in \citet{sagawa2019distributionally}, which results in 206,175 training examples.
\citet{gururangan2018annotation} have shown that there is spurious correlation between the label of contradiction and the presence of negation words (\textit{nobody, nothing, no, never}) due to annotation artifacts. 
We further split the negation words into two groups: set 1 (\textit{nobody, nothing}) and set 2 (\textit{no, never}) to have more variety in the attributes, i.e. $\mathcal{A}=\{$no negation, negation 1, negation 2$\}$, which together with labels forms 9 groups in the clean partition.  
We create \textbf{3} groups in the imperfect partition as shown in Tab.~\ref{tab:db}, where $G_1$ only contains examples from $a=$no negation, while $G_2$ and $G_3$ contain data from both $a=$negation 1 and $a=$negation 2. This causes a dilemma when upweighting either of the groups. 

\vspace{-1mm}
\noindent \textbf{Toxicity Detection.} This task aims to identity various forms of toxic languages (e.g. abusive speech, hate speech), an application with practical and important real-world consequences.
\citet{sap2019risk} have shown that there is a strong correlation between certain surface markers of English spoken by minority groups and the labels of toxicity. And such biases can be acquired and propagated by models trained on these corpora.
We perform experiments on the \textbf{FDCL18}~\citep{fortuna2018survey} dataset, a corpus of 100k tweets annotated with four labels: $\mathcal{Y} = \{$hateful, spam, abusive and normal$\}$. 
Since the dataset does not contain the dialect information, we follow \citet{sap2019risk} and use annotations predicted by the dialect classifier in~\citet{blodgett2016demographic} to label each example as one of four English varieties: $\mathcal{A}=\{$White-aligned, African American (AAE), Hispanic, and Other$\}$.
As noted in \citet{sap2019risk}, these automatically obtained labels correlate highly with self-reported race and provide an accurate proxy for the dialect labels.
These dialect attributes and toxicity labels together divide the dataset into 16 groups in the clean partition. 
To construct the imperfect partitions, we investigate a natural setting where data is divided by the dialect attributes, therefore we have \textbf{4} groups in the imperfect partition. 
The test set of FDCL18 contains some groups that are severely under-represented. 
In order to make the robust accuracy reliable yet still representative of the under-represented groups, we follow~\citet{michel2021modeling} and combine groups that contain less than 100 samples into a single group to compute robust test accuracies.

\subsection{Experimental Setup}
We fine-tune pretrained models for object recognition and NLP tasks that achieve high average test accuracies, specifically ResNet18~\citep{he2016deep} on CelebA and RoBERTa~\citep{liu2019roberta} on the MultiNLI and FDCL18 datasets. 
We select hyperparameters by the robust validation accuracy. For the clean partitions, we set $\alpha=0.2, \beta=0.5$ for all the three tasks. For the imperfect partitions, we set a relatively lower value of $\beta$ to highlight badly performed instances within groups.
Specifically, for NLP tasks we set $\alpha=0.5$, $\beta=0.2$ and $0.25$ for NLI and toxicity detection respectively, and for the image task, we set $\alpha=0.2, \beta=0.1$.
For more training details, see Appendix~\ref{app:exp}.
We measure both the \emph{average} accuracy over all the test data as well as the \emph{robust} accuracy (worst accuracy across all groups).
Even though different partitions (clean/imperfect) are used at training time, we always evaluate the model's robust accuracy across groups of the clean partitions of the test data.

We compare with \textbf{ERM}, which minimizes the average training loss on the empirical training distribution, formally
\vspace{-1mm}
\begin{equation}
\small
    \mathcal{L}_{\mathrm{ERM}}(\theta) = \mathbb{E}_{(x, y)\sim P_{train}}[\ell(x, y;\theta)]
\vspace{-1mm}
\end{equation}
We also compare with two variants of group DRO with different objective and optimization procedures: a \textbf{greedy}~\citep{oren2019distributionally} algorithm for CVaR-group DRO and a \textbf{exponentiated-gradient (EG)}~\citep{sagawa2019distributionally} procedure with full simplex .
Note that while previous work~\citep{sagawa2019distributionally} has found the greedy algorithm is unstable and underperforms EG, 
we did not observe this issue with our implementation where we took a slight different approach to compute the worst expected loss and we detail this difference in Appendix~\ref{app:bug}.
In addition, we compare with the \textbf{resampling} method, which optimizes on minibatches sampled from uniform group frequencies, which is often used for imbalanced datasets.

\subsection{Main Results}
We present the robust and average test accuracies of all three tasks under different partitions in Tab.~\ref{tab:res}.
Models are selected based on the worst-performing accuracy of group (of the clean partition) in the validation set. All the results are averaged over 5 runs with different random seeds. 
Except for ERM, all the models leverage the group information at training time. 
\textbf{First}, as expected, ERM models attain high average test accuracies across all the datasets but perform poorly on the worst-case group. 
\textbf{Second}, we observe that under the clean partition, group DRO models always significantly outperform ERM on the worst-group test accuracy with modest drop in the average test accuracy. And we also note that group DRO optimized with the greedy algorithm performs on par with that optimized by the EG based algorithm.
By contrast, the resampling method can not consistently perform well on the worst test groups on all datasets. 
Furthermore, our method performs similarly to or slightly better than group DRO on all three datasets under the clean partition.
\textbf{Third}, under the imperfect partition, neither group DRO nor resampling can perform well in the worst test groups and achieves similar performance to that of ERM models. 
On the other hand, our method performs significantly better in terms of the robust accuracy on all three datasets, with 5-37 points in improvement over group DRO models. Although the results are worse compared to those under the clean partition, we demonstrate that our method is much more agnostic to the underlying data partitions.

\vspace{-1mm}
\subsection{Analysis}
\label{sec:analysis}
\noindent \textbf{Why does GC-DRO perform well on robust accuracy?}
In this section, we investigate why group-conditional DRO works well under imperfect partitions. To do this, we first compute the actual weight ($\frac{n_i q^{(t)}(g_i)q^{(t)}(x, y|g_i)}{\hat{p}^{train(t)}(g_i)}$ in Alg.~\ref{alg:sgdro}) applied to each instance $(x_i, y_i)$ at every step $t$ for group DRO and our method respectively. 
The groups in imperfect partitions contain instances from different values of $(a, y)$ and to study the effects of learning weights on the test groups, we aggregate the weights of instances on each group $(a, y)$ (i.e. the clean partition) by taking average over all steps in each epoch.
In Fig.~\ref{fig:weights}, we plot the dynamic aggregated weights over the training course learned with the imperfect partition (3 groups, see Tab.~\ref{tab:db}) of the MNLI dataset.
We observe that GC-DRO can assign higher weights to instances that belong to groups of $(a= $negation, $y\neq$ contradiction$)$, which helps prevent the model from learning the spurious correlations between $a=$ negation and $y=$contradiction.
On the contrary, group DRO can not accomplish this goal because it can not handle these subgroups inside groups in a fine-grained way.
\begin{figure}[h]
    \vspace{-2mm}
    \centering
    \includegraphics[width=0.45\textwidth]{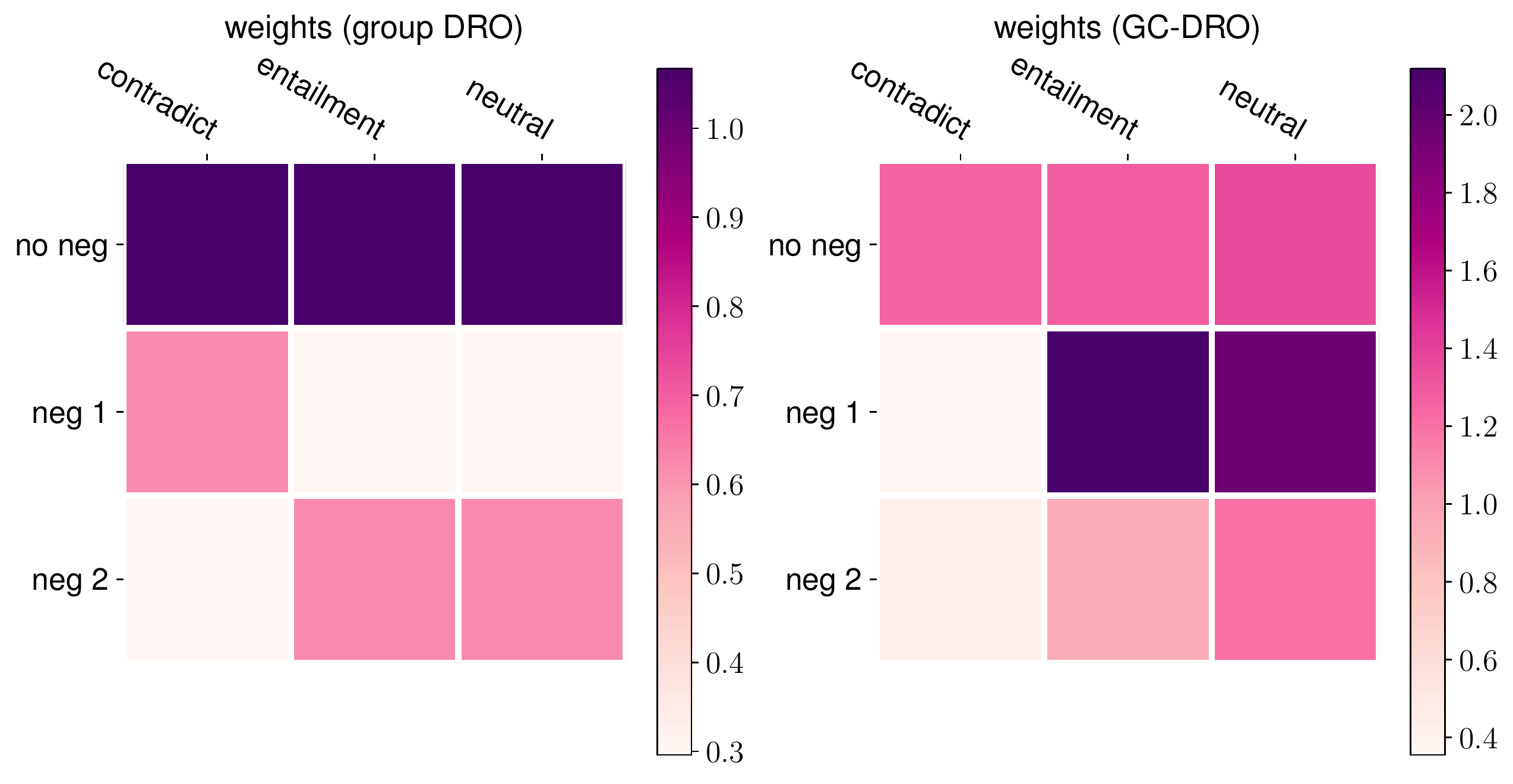}
    \vspace{-6mm}
    \caption{The heatmap of summarized learning weights for different groups from group DRO and our method.}
    \label{fig:heatmap}
    \vspace{-3mm}
\end{figure}

To make this trend more clear, we summarize the weights across all the epochs for each group of $(a, y)$ and present the heat map in Fig.~\ref{fig:heatmap}. 
We can see that group DRO focuses on learning from the large group that does not contain negation words but pays less attention to those minority groups. On the contrary, our method encourages the model to learn from minority groups that can help combat spurious features.

\begin{table}[t]
\centering
\small
\resizebox{0.4\textwidth}{!}{
\begin{tabular}{lcc}
\toprule
&Robust Acc &Average Acc \\
\midrule
ERM & 34.30 $\pm$ 1.83 &79.7 $\pm$ 1.05 \\
resampling &34.20 $\pm$ 2.36 &79.4 $\pm$ 1.24 \\
group DRO (EG) &32.84 $\pm$ 2.72 &\textbf{80.5 $\pm$ 0.59} \\
group DRO (greedy) &34.48 $\pm$ 4.69 &79.62 $\pm$ 0.59 \\
\midrule
GC-DRO (ours) &\textbf{45.06 $\pm$ 6.77} &70.7 $\pm$ 4.81 \\
\bottomrule
\end{tabular}}
\caption{Average and robust test accuracies of FDCL18 under the partitions via unsupervised clustering.}
\label{tab:bert:group}
\vspace{-3mm}
\end{table}

\noindent \textbf{On groups produced by unsupervised clustering.}
We study a more realistic setting where no group information is available and we use an unsupervised clustering algorithm to produce the partitions. 
Specifically, we first embed the training sentences of the FDCL18 dataset with Sentence-BERT~\citep{reimers2019sentence}, a well-performing semantic sentence embedder, then we use K-means to obtain 8 clusters. 
In Tab.~\ref{tab:bert:group}, we show the robust and average accuracy on the test set of the toxicity detection task. 
Our method once again significantly outperforms other baseline methods on the robust test accuracy, which demonstrates the robustness of GC-DRO under different partitions.

\begin{figure}[h]
    \vspace{-2mm}
    \centering
    \includegraphics[width=0.5\textwidth]{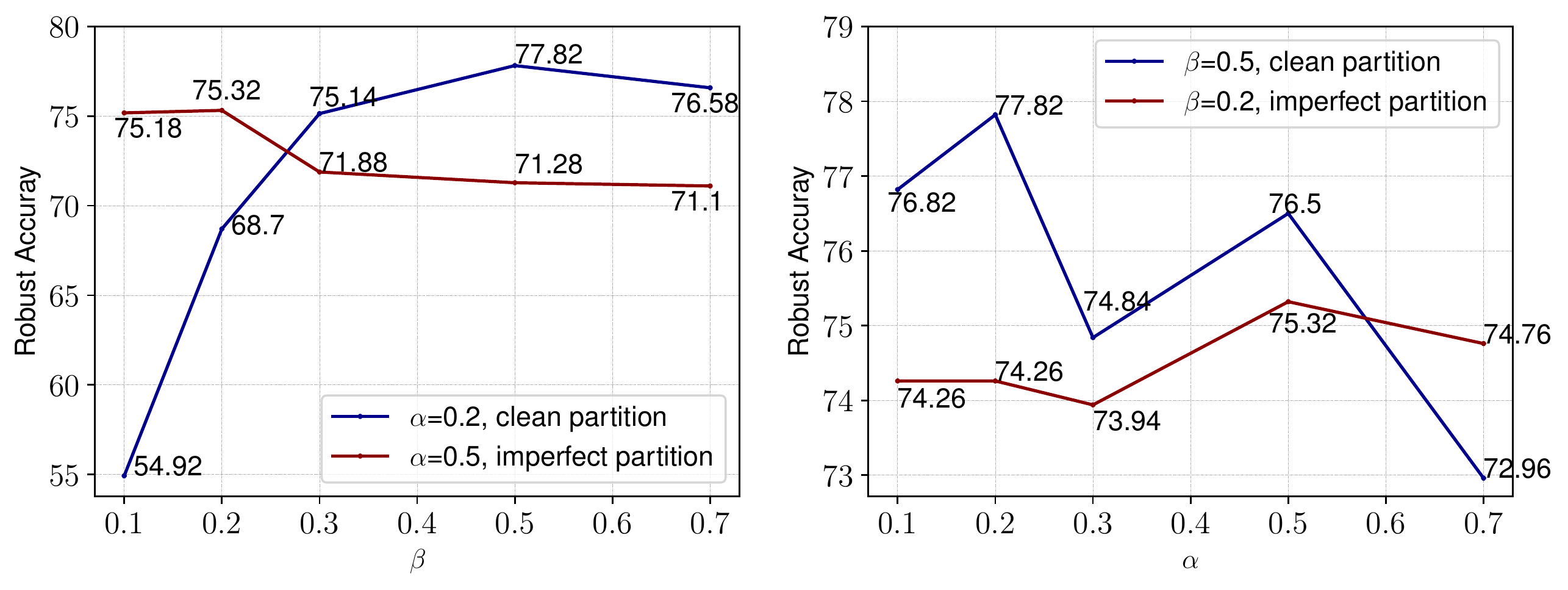}
    \vspace{-8mm}
    \caption{Ablation studies on $\alpha$ and $\beta$ on the MNLI datasets.}
    \label{fig:ablation}
    \vspace{-3mm}
\end{figure}
\noindent \textbf{Ablation studies on $\alpha$ and $\beta$.}
We perform ablation studies on the two important hyperparameters $\alpha$ and $\beta$ used in our method. 
In Fig~\ref{fig:ablation}, we fix one value and vary the other and plot the robust test accuracies over 5 random runs (the variance of average test accuracies is very small) on the NLI task. 
We observe that GC-DRO is less sensitive to different combinations of $\alpha$ and $\beta$ under the imperfect partitions.
However, for the clean partitions, a larger $\beta$ and a smaller $\alpha$ tends to yield better performance, as GC-DRO behaves more close to the plain group DRO.
\section{Conclusion}
Through a mathematical characterization of features used in prediction, we have demonstrated that under covariate shift ERM models can pick up spurious features or miss robust features.
The GC-DRO algorithm resulting from this analysis allows for a more flexible uncertainty set that performs consistently well in the worst test group under different partitions.
This new understanding of features opens up new avenues in both redesigning our distributionally robust algorithms, and further characterizing possible spurious factors that may influence model robustness, for example through unsupervised learning.

\section*{Acknowledgements}

This work in this paper was supported in part by a Facebook SRA Award and the NSF/Amazon Fairness in AI program under grant number 2040926. 
We thank the anonymous reviewers for useful suggestions.


\nocite{langley00}

\bibliography{main}
\bibliographystyle{icml2021}

\clearpage
\appendix
\onecolumn
\section{Proofs of Theorem 1}
\label{append:covariate:proof}
\begin{lemma}\label{lem:1}
If $T$ is sufficient statistics, we have $p(Y,X|T)=p(Y|T)\cdot p(X|T)$.
\end{lemma}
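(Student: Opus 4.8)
The plan is to derive this conditional independence statement directly from the definition of sufficiency introduced in Section~\ref{sec:def:spurious}, using nothing more than the chain rule for conditional probabilities. Recall that sufficiency of $T$ was characterized there by the equivalent condition $p(y \mid t, x) = p(y \mid t)$ for all relevant $x, t, y$. I would take this equivalent form as the starting point, since it is exactly the ingredient needed to break the dependence between $Y$ and $X$ once we condition on $T$.

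First I would factor the joint conditional via the chain rule, which is purely definitional and needs no hypotheses beyond existence of the relevant conditionals:
\begin{equation*}
p(Y, X \mid T) = p(Y \mid X, T)\, p(X \mid T).
\end{equation*}
Then I would invoke the sufficiency hypothesis in its equivalent form to replace the first factor, $p(Y \mid X, T) = p(Y \mid T)$, obtaining
\begin{equation*}
p(Y, X \mid T) = p(Y \mid T)\, p(X \mid T),
\end{equation*}
which is precisely the claimed identity. In words, sufficiency says $X$ carries no information about $Y$ beyond what $T$ already encodes, and this is equivalent to $X \perp Y \mid T$.

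I do not anticipate a genuine obstacle, as the argument is elementary. The only point warranting care is that $T = T(X)$ is a \emph{deterministic} function of $X$, so conditioning on both $X$ and $T$ is consistent (knowing $X$ fixes $T$), and it is the sufficiency assumption, rather than this determinism, that licenses dropping the dependence on $X$ in the first factor. A fully rigorous version would phrase these steps in terms of regular conditional distributions to cover the continuous case, but at the level of the paper's exposition the chain-rule plus the equivalent sufficiency condition is all that is required.
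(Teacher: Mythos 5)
Your argument is correct and is essentially the route the paper intends: the paper states Lemma~\ref{lem:1} without proof, treating it as immediate from the definition of sufficiency in \S\ref{sec:def:spurious} ($p(y\mid t,x)=p(y\mid t)$, equivalently $p(x\mid t,y)=p(x\mid t)$), which combined with the chain rule gives exactly your two-line derivation. Your remark about the consistency of conditioning on both $X$ and $T(X)$ is a reasonable extra precaution but not a departure from the paper's approach.
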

\begin{lemma}\label{lem:2}
If $T$ is sufficient statistics, we have $p(Y|T(X)) = p(Y|X)$.
\end{lemma}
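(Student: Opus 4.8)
The plan is to establish the pointwise identity $p(y \mid T(x)) = p(y \mid x)$ using only two ingredients: the fact that $T$ is a deterministic map of $X$, and the equivalent conditional form of the sufficiency condition supplied in definition (i), namely $p(y \mid t, x) = p(y \mid t)$. No machinery beyond the definition is required, so the work lies in assembling these two facts in the right order.

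First I would exploit that $T(X)$ is a deterministic function of $X$. Writing $t = T(x)$, the event $\{X = x\}$ entails $\{T(X) = t\}$, so conditioning on $X = x$ is identical to conditioning jointly on $\{X = x,\, T(X) = t\}$; in symbols, $p(y \mid x) = p(y \mid x, t)$. This step carries no content beyond the observation that $T(X)$ is a redundant variable once $X$ is pinned down. Next I would invoke sufficiency in its conditional form: by definition (i) we have $p(y \mid t, x) = p(y \mid t)$ for all compatible $x, t, y$. Chaining the two displays gives $p(y \mid x) = p(y \mid x, t) = p(y \mid t) = p(y \mid T(x))$, which is exactly the claim.

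Alternatively, one may route the argument through Lemma~\ref{lem:1}: the factorization $p(Y, X \mid T) = p(Y \mid T)\,p(X \mid T)$ is precisely the conditional independence $X \perp Y \mid T$, from which $p(y \mid x, t) = p(y \mid t)$ follows immediately, and the determinism observation closes the gap in the same way. Either presentation reduces the lemma to a one-line combination once the two facts are in hand.

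The only point requiring care—and the closest thing to an obstacle—is the measure-theoretic treatment of the conditioning events when $X$ is continuous, since $\{X = x\}$ may have probability zero. I would handle this by reading every expression as the usual conditional density / regular conditional distribution, matching the density-level convention used throughout the paper, so that the identification $\{X = x\} = \{X = x,\, T(X) = T(x)\}$ and the substitution of the sufficiency identity are legitimate for almost every $x$ with respect to the law of $X$. No estimates or limiting arguments beyond this are needed.
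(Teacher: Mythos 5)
Your proof is correct, and it reaches the conclusion by a more elementary route than the paper's. The paper introduces a complementary statistic $T'(X)$ chosen so that $S(X) = \langle T(X), T'(X)\rangle$ is an invertible mapping of $X$, writes $p(Y|X) = p(Y|T(X), T'(X))$, and then compares two factorizations of $p(Y, T'(X) \mid T(X))$: the chain rule gives $p(Y|T'(X),T(X))\,p(T'(X)|T(X))$, while Lemma~\ref{lem:1} (conditional independence of $X$ and $Y$ given $T$, inherited by any function of $X$) gives $p(Y|T(X))\,p(T'(X)|T(X))$; equating the two yields the claim. Your primary argument collapses this scaffolding: since $\{X=x\}$ already entails $\{T(X)=T(x)\}$, you get $p(y|x)=p(y|x,t)$ with no auxiliary statistic at all, and the conditional form of sufficiency $p(y|t,x)=p(y|t)$ --- asserted as part of definition (i) in Section~\ref{sec:def:spurious} --- finishes in one line. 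Your fallback route through Lemma~\ref{lem:1} is in fact exactly the paper's proof specialized to the admissible choice $T'(X)=X$, which makes the invertibility requirement trivial. What the paper's formulation buys is that it works purely from the factorization form of sufficiency via Lemma~\ref{lem:1}, without leaning on the equivalence ``$p(x|t,y)=p(x|t)$ iff $p(y|t,x)=p(y|t)$'' that the definition asserts but never proves; your alternative route retains that advantage while dispensing with the generic $T'$. The measure-theoretic caveat you raise is treated no more carefully in the paper --- everything there is likewise written at the level of discrete conditionals and densities --- so your handling matches the paper's level of rigor.
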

\begin{proof}
 Find $T'(X)$, s.t. $S(x) = \langle T(X), T'(X) \rangle$ is an invertible mapping of $X$, thus $p(Y|X)=p(Y|S(X))=p(Y|T(X), T'(X))$. We have,
 \begin{equation}
     p(Y, T(X), T'(X)|T(X)) = p(Y|T'(X), T(X))p(T'(X)|T(X))
     \label{eq:1}
 \end{equation}
 From Lemma~\ref{lem:1}, we have
 \begin{equation}
     p(Y, T(X), T'(X)|T(X)) = p(Y|T(X))p(T'(X)|T(X))
     \label{eq:2}
 \end{equation}
 By (\ref{eq:1}) and (\ref{eq:2}), we obtain $p(Y|T'(X), T(X)) = p(Y|T(X)) = p(Y|X)$.
\end{proof}

\cov*
\begin{proof}
Since there is covariate shift between $\ptrue$ and $\ptrain$, we have $\ptrain(Y|X) = \ptrue(Y|X), \forall x\in \mathcal{X}_{train}$.
Since $T_{train}(X)$ is MSS of $\ptrain$ and by Lemma~\ref{lem:2}, we have $\ptrain(Y|T_{train}(X)) = \ptrain(Y|X) = \ptrue(Y|X) = \ptrue(Y|T_{ideal}(X)), \forall x\in \mathcal{X}_{train}$.
Then $\forall x\in \mathcal{X}_{train}, y \in \mathcal{Y}$, 
\begin{align}\label{eq:3}
\ptrain(y|T_{ideal}(x)) &= \sum\limits_{x': T_{ideal}(x') = T_{ideal}(x)} \ptrain(y|x') \ptrain(x'|T(x)) \nonumber \\
 &= \sum\limits_{x': T_{ideal}(x') = T_{ideal}(x)} \ptrue(y|x') \ptrain(x'|T(x)) \nonumber \\
 &= \sum\limits_{x': T_{ideal}(x') = T_{ideal}(x)} \ptrue(y|T(x)) \ptrain(x'|T(x)) \nonumber \\
 &= \ptrue(y|T_{ideal}(x))
\end{align}
Then we have
\begin{align}\label{eq:4}
    H_{train}(Y|T_{train}(X)) &= \sum_{x, y} \ptrain(x, y)[-\log \ptrain(y|T_{train}(x))] \nonumber \\
    &= \sum_{x,y} \ptrain(x,y)[-\log \ptrue(y|T_{ideal}(x))] \nonumber \\
    &= \sum_{x,y} \ptrain(x,y)[-\log \ptrain(y|T_{ideal}(x))] \nonumber \\
    &= H_{train}(Y|T_{ideal}(X))
\end{align}
From \eqref{eq:4} and the definition of sufficient statistics, we have
\begin{equation}
     I_{train}(Y; T_{train}(X)) = I_{train}(Y; X) = I_{train}(Y; T_{ideal}(X))
\end{equation}
Thus, $T_{ideal}(X)$ is the sufficient statistics of $X$ about $Y$ under $\ptrain$.
By definition, we have 
\begin{equation}
    H_{train}(T_{train}(X)|T_{ideal}(X)) = 0.
\end{equation}
\end{proof}

\cor*
\begin{proof}
Since $\mathcal{X}_{train} = \mathcal{X}_{ideal} = \mathcal{X}$, with the similar derivation of \eqref{eq:3}, we have $\forall x\in \mathcal{X}, y \in \mathcal{Y}$
\begin{equation}
    \ptrue(y|T_{ideal}(x)) = \ptrue(y|T_{train}(x))
\end{equation}
Together with Theorem~\ref{thm:cov}, we have $T_{train}(x)$ is also the MSS under $\ptrue$.
\end{proof}

\section{Connections between MLE and Learning Minimal Sufficient Statistics}
\label{app:connect}
\subsection{Information Bottleneck (IB) Method}
\label{app:ib}
The information bottleneck (IB) method~\citep{tishby2000information} is an information theoretic principle introduced to extract relevant information that an input $X \in \mathcal{X}$ contains about an output random variable $Y \in \mathcal{Y}$.
Defined on a joint distribution of $X$ and $Y$, IB learns a mapping function $T(X)$ by optimizing the trade-off between the mutual information $I(X; T)$ and $I(Y; T)$ such that $T(X)$ is a compressed representation of $X$ (quantified by $I(X; T)$) that is most informative about $Y$ (quantified by $I(Y; T)$). Let $T$ be parameterized by $\theta$, the objective of IB optimizes the trade-off between $I(Y; T_{\theta}(X))$ and $I(X; T_{\theta}(X))$:
\begin{align}
\label{eq:IB}
    \min_{\theta} -I(Y, T_{\theta}(X)) + \beta I(X; T_{\theta}(X))
\end{align}
where $\beta$ is a positive Lagrange multiplier.

\citet{shwartz2017opening} casts finding of minimal sufficient statistics (MSS) $T(X)$ as a constrained optimization problem using data-processing inequality~\citep{cover1999elements}:
\begin{align}
\nonumber
    &\min_{T(X)}~~  I(T(X); X) \\
    &s.t~~ I(T(X); Y) = I(X;Y)
\end{align}
This corresponds to the IB method (Eq.~\ref{eq:IB}) which extends the notion of relevance between functions of samples and parameters in conventional MSS to any joint distribution of $X$ and $Y$. The IB method provides a computational framework for finding approximate MSS in a soft manner by trading off the sufficiency for $Y$ (I(Y; T(X))) and the minimality of the statistic ($I(X, T(X))$) with the Lagrange multiplier $\beta$~\citep{shwartz2017opening,shamir2010learning}.

\subsection{Connections between MLE and IB}
\label{app:mle&ib}
Given that the IB objective is approximately learning MSS in a soft manner, we next build the connections between the popularly adopted maximum likelihood estimation (MLE) in supervised learning and the IB objective. We show that under certain assumptions, MLE is approximating the IB objective defined on the joint distribution of $\ptrain(X, Y)$.

To facilitate the discussions, we decompose the model parameters into $\theta$ and $\phi$ that denote the parameters of the feature extractor $T_{\theta}(x)$ and the classifier respectively. MLE minimizes the expected negative log probability under $\ptrain(X, Y)$:
\begin{align}
    &\min_{\theta, \phi} \mathbb{E}_{x, y\sim \ptrain(X, Y)}[-\log p_{\theta, \phi}(x, y)] \\
    \iff & \min_{\theta, \phi} \mathbb{E}_{x, y\sim \ptrain(X, Y)}[-\log p_{\phi}(y|T_{\theta}(x)) - \log p_{\theta}(x)]
    \label{eq:mle0}
\end{align}
Usually, we only model the conditional distribution $p_{\phi}(Y|X)$ and assume that $p_{\theta}(X) = \ptrain(X)$ which is independent from $\theta$.
With the assumption that $p_{\theta}(x) \propto p^{\beta}(T_{\theta}(x))), \beta>0$, (\ref{eq:mle0}) can be rewritten as:
\begin{align}
    \min_{\theta, \phi} \mathbb{E}_{x, y\sim \ptrain(X, Y)}[-\log p_{\phi}(y|T_{\theta}(x))] + \beta \mathbb{E}_{x\sim \ptrain(X)}[-\log p(T_{\theta}(x))]
\label{eq:mle1}
\end{align}

Assume that the neural network parameterized by $\phi$ is a universal function approximator, then we can replace $\min_{\theta, \phi}$ with $\min_{\theta}$ and (\ref{eq:mle1}) can be written as: 
\begin{align}
     &\min_{\theta} H(Y|T_{\theta}(X)) + \beta H(T_{\theta}(X)) \\
    & \nonumber \mathrm{by\;(1)}\;I(Y; T_{\theta}(X)) = H(Y) - H(Y|T_{\theta}(X)) \\
    & \nonumber \mathrm{\;\;\;\;\;(2)}\; H(T_{\theta}(X)) = I(X; T_{\theta}(X)) + H(T_{\theta}(X)|X) = I(X; T_{\theta}(X))\\
    \iff & \min_{\theta} -I(Y; T_{\theta}(X)) + \beta I(X; T_{\theta}(X)) 
\label{eq:mle2}
\end{align}
We can see that under the assumption of $p_{\theta}(x) \propto p^{\beta}(T_{\theta}(x)))$, the MLE objective can be converted into the same form as the IB objective. In practice, we usually do not model $\ptrain(X)$ and only optimize the first term $I(Y; T_{\theta}(X))$ in (\ref{eq:mle2}). However, previous work~\citep{shwartz2017opening,geiger2020information} has shown that deep neural networks (DNNs) are implicitly minimizing $I(X; T_{\theta}(X))$ with a wide range of activation functions and architectures, which are manifested as a second compression phase during learning with SGD.
Thus, we can presumably consider MLE as approximating the IB objective, which is equivalent to learning the MSS on the train distribution $\ptrain(X, Y)$.

\section{Details of the Online Greedy Algorithm for Group DRO}
\label{app:alg:gdro}
\begingroup
\begin{center}
\removelatexerror
\resizebox{!}{0.2 \columnwidth}{
\begin{algorithm}[H]
\SetAlgoLined
\DontPrintSemicolon
\SetKwInOut{Input}{Input}
\SetKwInOut{Output}{Output}
\SetCommentSty{newcomment}
\SetKwComment{Comment}{$\triangleright$\ }{}
\Input{$\alpha$; $m$: total number of groups}
Initialize historical average group losses $\hat{L}^{(0)}$; historical estimate of group probabilities $\hat{p}^{train(0)}$; learning rate $\eta$\\
\For{$t=1,\cdots,T$}{
    Sample a mini-batch batch $B=(\rvx, \rvy, \rvg)$ uniformly from $\ptrain$ \\
    \Comment{Update the historical vectors of $\hat{L}^{(t)}$ and $\hat{p}^{train(t)}$ for each group $g \in \{1, \cdots, m\}$}
    $\hat{L}^{(t)}(g) \leftarrow \mathrm{EMA}(\{ \ell(\rvx_i, \rvy_i; \theta^{(t-1)}): \rvg_i=g\}, ~ \hat{L}^{(t-1)}(g))$\\
    $\hat{p}^{train(t)} \leftarrow \mathrm{EMA}(\mathrm{\#samples~ of~ each~ group~ in~ B}, ~\hat{p}^{train(t-1)})$\\
    \Comment{Update the worst-case distribution $q^{(t)}$}
    Sort $\hat{p}^{train(t)}$ in the order of decreasing $\hat{L}^{(t)}$ and denote the sorted group indexes $\bm{\pi}$\\
    $q^{(t)}(g_{\bm{\pi}_i}) = \min\{\frac{\hat{p}^{train(t)}(g_{\bm{\pi}_i})}{\alpha}, 1 - \sum_{j=1}^{i-1}\frac{\hat{p}^{train(t)}(g_{\bm{\pi}_j})}{\alpha}\}$\\
    \Comment{Update model parameters $\theta$}
    $\theta^{(t)} = \theta^{(t-1)} - \frac{\eta}{|B|} \sum_{i=1}^{|B|}\frac{q^{(t)}(\rvg_i)}{\hat{p}^{train(t)}(\rvg_i)}\nabla\ell(\rvx_i, \rvy_i;\theta^{(t-1)})$
}
  \caption{\label{alg:gdro}Online greedy algorithm for group DRO~\citep{oren2019distributionally}}
\end{algorithm}
}
\end{center}
  \vspace{-3mm}
\endgroup
EMA refers to exponential weighted moving average such that $\mathrm{EMA}(v_1, v_2) = \gamma v_1 + (1-\gamma) v_2$, where $\gamma \in (0, 1)$.

\section{Synthetic Experiments: on Investigation Spurious Features under Covariate Shift}
\label{app:syn}
\begin{figure}[h]
    \centering
    \includegraphics[width=0.35\textwidth]{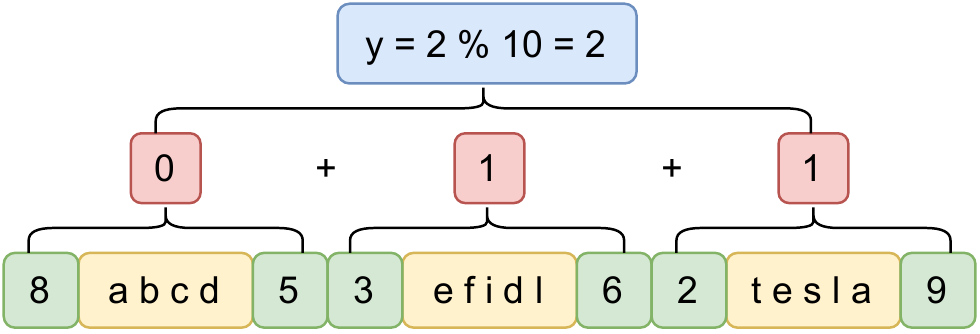}
    \vspace{-2mm}
    \caption{An illustrative example of the synthetic task.}
    \label{fig:syn}
    \vspace{-4mm}
\end{figure}

\paragraph{Synthetic Experiments}
We design synthetic experiments where data is generated based on the ground-truth rules and different biases are injected.
We show that even in the presence of necessary information to learn the rules, the ERM model (specifically, we examine MLE) can still learn spurious features or miss robust features under covariate shift.
The synthetic task aims to predict an integer $y \in \{0, \cdots, 9\}$ conditioned on a sequence $x$ as shown in Fig.~\ref{fig:syn}. 
Concretely, $x$ is composed of $m$ chunks, where each chunk $c_i$ has $|c_i|$ characters that are randomly sampled from an alphabet $\mathcal{V}$. 
We prepend an integer $c_i^1$ and append an integer $c_i^2$ to each chunk $c_i$, and both $c_i^1$ and $c_i^2$ are uniformly sampled from $[1, 10]$. 
The target integer $y$ is predicted following the rules: each triple of $(c_i^1, c_i, c_i^2)$ produces an indicator value $d_i$; $d_i = c_i^2 - c_i^1$ if $c_i^2 > c_i^1$, otherwise $d_i=0$; then $y = (\sum_{i=1}^m d_i) \mod 10$. We set $3 \leq m \leq 6$, $3 \leq |c_i| \leq 5$ and $|\mathcal{V}|=26$,. 
We use a one-layer bidirectional LSTM~\citep{hochreiter1997long} to model the input sequence and use the final hidden states of the LSTM to predict the target value. 
We create training data following the the above description and design two settings that introduce covariate shift to examine if the model can learn the rules with ERM.

\textbf{(a) Setting 1 --- ERM-trained models can miss robust features under covariate shift}:
We create the training data by imposing $c_m^2 > c_m^1$ on the last chunk $c_m$ of all the training samples. 
When we create the training data, the rules applied to each chunk are the same as described above, which means that the model does not need to learn additional rules for the last chunk. 
We are interested in examining whether the model trained with ERM will apply the rules learned from other chunks to the last one or it will miss the robust features of the last chunk.
At test time, we evaluate on two groups of test sets: $\mathcal{D}_{out}$ where $c_m^2 \leq c_m^1$, different from the training data, and $\mathcal{D}_{in}$ where $c_m^2 > c_m^1$, consistent with the training data. 
From Tab.~\ref{tab:syn}, we see that the test accuracy on $\mathcal{D}_{out}$ is much lower that that on $\mathcal{D}_{in}$. 
This demonstrates that the model only learns robust features from chunks $c_1^{m-1}$ but misses the robust features of the last chunk $c_m$. 
We conjecture that the model trained with ERM learns in a lazy way where it tries to minimize the entropy of learned features by memorizing patterns and taking shortcuts as discussed further in Appendix~\ref{app:mle&ib}.

\textbf{(b) Setting 2 --- ERM-trained models can learn spurious features under covariate shift}: 
In the second setting, we inject spurious patterns into the training data that co-occur with the rules we aim to learn. 
As both robust rules and spurious patterns co-exist in the training data, we would like to see whether the model picks up the spurious ones or the robust ones.
Specifically, each training input sequence has a chunk $c_j$ that includes a special segment of characters, e.g. \textsl{a b}. The remainder of $d_j=c_j^2-c_j^1$ and the sum of all indicators $\sum_{i=1}^m d_i$ mod by 10 are the same such that the target label $y$ is always the same as the indicator $d_j$. 
Similarly, we test on two cases: i) $\mathcal{D}_{in}$ where every sequence includes a special chunk as in the training set; ii) $\mathcal{D}_{out}$ where characters in each chunk are uniformly sampled. 
We can see from Tab.~\ref{tab:syn} that the model learns to use the spurious patterns to predict the target label instead of the general rules.

\begin{table}[t]
    \centering
    \small
    \begin{tabular}{l|cc}
    \toprule
    & $\mathcal{D}_{in}$ & $\mathcal{D}_{out}$ \\
    \midrule
    Setting 1   &  99.93 $\pm$ 0.02 & 14.68 $\pm$ 2.60 \\
    Setting 2   &  100.00 $\pm$ 0.00 & 10.26 $\pm$ 0.25 \\
    \bottomrule
    \end{tabular}
    \vspace{-2mm}
    \caption{Test accuracy of the synthetic task.}
    \label{tab:syn}
    \vspace{-5mm}
\end{table}

\vspace{-2mm}
\section{Experimental Details}
\label{app:exp}
\subsection{Models and Training Details} 
\paragraph{Model Specific Settings} 
In our method, we adopt two criterions in GC-DRO to determine when to update $q(x, y|g)$ for each groups: (1) update when the robust validation accuracy drops (2) update at every epoch. 
With (2), $q(x, y |g)$ is updated more frequently. 
For MNLI and Celeb-A, we use the second criterion. 
For FDCL18, we use the first criterion, because this is a relatively smaller dataset and updating $q(x, y|g)$ less frequently makes training more stable.
Every time $q(x, y|g)$ is updates, we clear the historical losses in EMA that is used for updating $q(g)$.
We use exponentially weighted moving average (EMA) to compute the historical losses for both $q(g)$ and $q(x, y|g)$, for which we denote $\mathrm{EMA}_{\text{G}}$ and and $\mathrm{EMA}_{\text{CG}}$ respectively.
As shown above, we use $\gamma$ to denote the coefficient for current value in EMA, thus $1-\gamma$ is used to the historical value.
We found that the value of $\gamma$ is an important hyperparameter in some cases to achieve better performance, since the final $q$ distribution is computed through sorting the losses accumulated via EMA.
Basically, a higher $\gamma$ pays more attention to the current value.
We search over $\{0.1, 0.5\}$ for both $\gamma$ used in $\mathrm{EMA}_{\text{G}}$ and  $\mathrm{EMA}_{\text{CG}}$ respectively. 
Through the robust accuracy on the validation set, we set both $\gamma$'s to be 0.5 for the NLP tasks except that for the imperfect partition of toxicity detection we set $\gamma$ used in $\mathrm{EMA}_{\text{G}}$ to be 0.1.
For the image task, we set both $\gamma$'s to be 0.1.
For the $\gamma$ used in accumulating the historical fractions of groups, we always use a small value 0.01.

\vspace{-3mm}
\paragraph{Training Details}
For the NLP tasks, we finetune a base Roberta model~\citep{liu2019roberta,ott2019fairseq} and we segment the input text into the sub-word tokens using the tokenization described in~\citep{liu2019roberta}.
During training, we sample minibatches that contain at most 4400 tokens. We train MNLI using Adam~\citep{kingma2014adam} with an intitial learnig rate of $1e-5$ for 35 epochs and FDCL18 for 45 epochs, and we linearly decay the learning rate at every step until the end of training. 
For the image task, we fine-tune a ResNet-18~\citep{he2016deep} for 50 epochs with batch size of 256. 
We use SGD with learning rate of $1e-4$.
At the end of every epoch, we evaluate the robust accuracy on the validation set.
We train on one Volta-16G GPU and it takes around 2 - 5 hours to finish one experiments for different datasets.

\subsection{Implementation of the Group DRO Loss}
\label{app:bug}
We referred to the implementation of greedy group DRO in \citet{sagawa2019distributionally}, where they use the exact formulation in Eq.~\ref{eq:gdro} to compute the expected loss, which leads to inferior performance compared to the exponentiated-gradient based optimization as reported in \citet{sagawa2019distributionally}. 
The implementation computed the final loss by first computing the average loss over instances for each group (MC for the inner expectation), then compute the full expected value over the averaged group loss, as shown below:
\begin{equation}
\vspace{-3mm}
    \ell(\rvx, \rvy, \rvg; \theta) =  \sum_{g} q(g) \bar{\ell}(g) = 
    \sum_g q(g)\frac{1}{C_g}\sum_{\{i, \forall \rvg_i = g\}}{\ell(\rvx_i, \rvy_i; \theta)},
\end{equation}
where $(\rvx, \rvy, \rvg)$ is a mini-batch and $C_g$ is the number of samples that belong to group $g$ in the mini-batch. We can see that instances that belong to different groups are weighted correspondingly by the number of group size in a mini-batch. This causes that instances in large group get unfairly lower weights, especially when its probability in the $q$ distribution is low.
We fix this by directly computing the expected loss over the joint distribution of $q(x, y, g)$, i.e.~$\mathbb{E}_{(x_i,y_i,g_i)\sim q(x,y,g)} \ell(x_i, y_i, g_i; \theta) = \mathbb{E}_{(x_i, y_i, g_i)\sim \ptrain(x, y, g)} \frac{q(x_i, y_i, g_i)}{\ptrain(x_i, y_i, g_i)}\ell(x_i, y_i, g_i)$. 
Specifically, we do this by summing over all the importance weighted instance losses using corresponding group weights and taking average.
This allows us to obtain unbiased gradient estimates of $\theta$.
\begin{equation}
    \frac{1}{N} \sum_i \frac{q(\rvg_i)}{\ptrain(g_i)}\ell(\rvx_i, \rvy_i; \theta)
\end{equation}




\end{document}